 \newsavebox\myVerb
 \newcommand{%
 	\scalebox{}{\input{}}  
 }[2]{%
 	\scalebox{#1}{\input{#2}}  
 }
 \tikzset{>=latex}
 \pgfplotsset{compat=1.13}
\newcommand{\kk}[1]{\textcolor{black}{#1}}
\begin{document}

\title{Efficient Structure-preserving Support Tensor Train Machine }

\author{\name{Kirandeep Kour} \email{kour@mpi-magdeburg.mpg.de}\AND
     	\name{Peter {Benner}} \email{benner@mpi-magdeburg.mpg.de}\\
       \addr
       Computational Methods in Systems and Control Theory \\
       Max Planck Institute for Dynamics of Complex Technical Systems\\
       Magdeburg, D-39106, Germany. \\
       \AND
       \name Sergey Dolgov \email S.Dolgov@bath.ac.uk  \\
       \addr
       Department of Mathematical Sciences\\
       University of Bath\\
       Bath BA2 7AY, United Kingdom.\\
       \AND
       \name  Martin Stoll \email martin.stoll@mathematik.tu-chemnitz.de \\
       \addr
       Faculty of Mathematics\\
       Technische Universit\"{a}t Chemnitz\\
       Chemnitz, D-09107, Germany.\\
    }
\editor{}

\maketitle

\begin{abstract}
	An increasing amount of  collected data are high-dimensional multi-way arrays (tensors), and it is crucial for efficient learning algorithms to exploit this tensorial structure as much as possible. The ever present curse of dimensionality for high dimensional data and the loss of structure when vectorizing the data motivates the use of tailored low-rank tensor classification methods. In the presence of  small amounts of training data, kernel methods offer  an attractive choice as they provide the possibility for a nonlinear decision boundary.  \kk{We develop the Tensor Train Multi-way Multi-level
Kernel (TT-MMK), which combines the simplicity of the Canonical Polyadic  decomposition, the classification power of the Dual Structure-preserving Support Vector Machine, and the reliability of the Tensor Train (TT) approximation.}
We show by experiments that the TT-MMK method is usually more reliable computationally, less sensitive to tuning parameters, and gives higher prediction accuracy in the SVM classification when benchmarked against other state-of-the-art techniques.
\end{abstract}
\begin{keywords}
  Tensor Decomposition, Support Vector Machine, Kernel Approximation, High-dimensional Data, Classification
\end{keywords}

\section{Introduction}

In many real world applications, data often emerges in the form of high-dimensional tensors.
It is typically very expensive to generate or collect such data, and we assume that we might be given a rather small amount of test and training data.
Nevertheless, it remains crucial to be able to classify tensorial data points.
A prototypical example of this type is fMRI brain images~\citep{GLOVER2011133}, which consist of three-dimensional tensors of voxels, and may also be equipped with an additional temporal dimension,
in contrast to traditional two-dimensional pixel images.

One of the most popular methods for classifying data points are Support Vector Machines (SVM)~\citep{vapnik,vapnik98}.
These are based on margin maximization and the computation of the corresponding weights via an optimization framework, typically the SMO algorithm~\citep{Platt98sequentialminimal}.
These methods often show outstanding performance, but the standard SVM model~\citep{vapnik95} is designed for vector-valued rather than tensor-valued data.
Although tensor objects can be reshaped into vectors, much of the information inherent in the tensorial data is lost.
For example, in an fMRI image, the values of adjacent voxels are often close to each other~\citep{DuSK}.
As a result, it was proposed to replace the vector-valued SVM by a tensor-valued SVM.
This area was called Supervised Tensor Learning (STL)~\citep{Tao07,Zhou13,Guo12}.
In~\citet{wolf}, the authors proposed to minimize the rank of the weight parameter with the orthogonality constraints on the columns of the weight parameter instead of the classical maximum-margin criterion, and~\citet{Pirisiavash} relaxed the orthogonality constraints to further improve the Wolf\textquotesingle s method.
~\citet{Hao13} consider a rank-one tensor factorization of each input tensor, while~\citet{kotsia} adopted the Tucker decomposition of the weight parameter instead of rank-one tensor decompositions to retain a more structural information.
~\citet{Zeng17} extended this by using a Genetic Algorithm (GA) prior to the Support Tucker Machine (STuM) for the contraction of the input feature tensor.
Along with these rank-one and Tucker representations, recently the weight tensor of  STL has been approximated using the Tensor Train (TT) decomposition~\citep{chen2018support}.
We point out that these methods are mainly focusing on a linear representation of the data.
It is well known that a linear decision boundary is often not suitable for the separation of complicated real world data~\citep{MLbook1}.

Naturally, the goal is to design a nonlinear transformation of the data, and we refer to~\citet{Signoretto11,Signoretto12,QZhao13a}, where kernel methods have been used for tensor data.
All these methods are based on the Multi-linear Singular Value Decomposition/Higher Order Singular Value Decomposition, which rely on the flattening of the tensor data.
Therefore, the resulting vector and matrix dimensions are so high that the methods are prone to over-fitting.
Moreover, the intrinsic tensor structure is typically lost. Thus, other approaches are desired.

The approximation of tensors based on low-rank decompositions has received a lot of attention in scientific computing over  recent years~\citep{cichocki2016tensor,Kolda09,Cichocki13,Liu15}.
\kk{A Dual Structure-preserving Kernel (DuSK) for STL, which is
particularly tailored to SVM and tensor data, was introduced in~\citep{DuSK}.
This kernel is defined on the Canonical Polyadic (CP) tensor format, also known as Parallel Factor Analysis, or PARAFAC~\citep{Hitchcock1927,Hitchcock1928}.}
\kk{Once the CP format is available, DuSK delivers an accurate and efficient classification, but the CP approximation of arbitrary data can be numerically unstable and difficult to compute~\citep{desilva2008}. In general, any optimization method (Newton, Steepest Descent or Alternating Least Squares) might return only a locally optimal solution, and it is difficult to assess whether this is a local or global optimum.}
Later on, kernelized tensor factorizations, specifically a Kernelized-CP (KCP) factorization, have been introduced in~\citet{MMK}, and the entire technique has been called the Multi-way Multi-level Kernel (MMK) method.
Further elaboration and understanding of the KCP approach~\citep{KSTM} is provided by a kernelized Tucker model, inspired by~\citet{Signoretto2013}.

Recently, kernel approximations in the TT format have been introduced in~\citet{chen2020kernelized}.
Initially, we had pursued a similar idea for fMRI data sets, but
we observed that the nonlinear SVM classification
using directly the TT factors leads to poor accuracy,
since different TT factors have different dimensions and scales,
making the feature space more complicated.
Hence, we have come up with a better exploitation of the data structure, as we explain in this paper.

Tensor decompositions and kernel-based methods have become an indispensable tool in many learning tasks.
For example,~\citet{exponential} uses the TT decomposition for both the input tensor and the corresponding weight parameter in generalized linear models in machine learning.
A Kernel Principal Component Analysis (KPCA), a kernel-based nonlinear feature extraction technique, was proposed in~\citet{Wu07}.
The authors of~\citet{speedingupCNN} propose a way to speed up Convolutional Neural Networks (CNN) by applying a low-rank CP decomposition on the kernel projection tensor.
\subsection{Main Novelty}
In this paper, we \kk{develop an efficient structure-preserving nonlinear kernel function for SVM classification of tensorial data.
We start with the TT approximation of the data points, which can be computed reliably by the TT-SVD algorithm.
Moreover, we enforce uniqueness of the SVD factors,
such that ``close'' tensors yield ``close'' TT factors.
Second, we perform an exact expansion of the TT decomposition into the CP format.
This unifies the dimensions of the data used in classification.
Finally, we redistribute the norms of the CP factors to equilibrate the actual scales of the data elements.
This yields  a CP decomposition that is free from scaling indeterminacy, while being a reliable approximation of the original data.
We have observed that using this decomposition in DuSK significantly increases the classification accuracy and stability of the STL.}

The paper is structured as follows.
In~\secref{preliminaries}, we set the stage introducing basic definitions and important tools.
An extension to the tensor format SVM is explained in~\secref{framektt},
where we also introduce the Kernelized Support Tensor Machine (KSTM) via the kernel trick (\secref{fmapkertrick}).
In~\secref{overalgo} we explain the entire proposed algorithm step by step.
In particular, we introduce the uniqueness enforcing TT-SVD algorithm (\secref{sec: USVD}), the TT-CP expansion (\secref{TT-CP}) and the norm equilibration (\secref{sec:normequ}).
In~\secref{numel} we benchmark the different steps of the proposed algorithm and compare it to a variety of competing methods using two data sets each from two different fields with a limited amount of training data, which are known to be challenging for classification.
\section{Preliminaries} \label{preliminaries}
This section introduces terminology and definitions used throughout the paper.
\subsection{Tensor Algebra}
A tensor  is a multidimensional array~\citep{Kolda09} which is a higher order generalization of vectors and matrices. We denote an $M^{th}$-order tensor ($M \geq 3$) by a calligraphic letter \tensx $\in$ \dims R I M, its entries by \tenem x, a matrix by a boldface upper case letter \bX $\in$ \RIJ, and a vector by a boldface lower case letter $\bx \in \RI$. Matrix and vector elements are denoted by $x_{ij} = \bX(i,j)$ and $x_i = \bx(i)$, respectively. The order of a tensor is the number of its \emph{dimensions}, \emph{ways} or \emph{modes}. The \emph{size} of a tensor stands for the maximum index value in each mode. For example, \tensx \  is of order $M$ and the size in each mode is $I_m$, where $  m \in \expect M:=\left\lbrace 1,2, \ldots , M \right\rbrace $. For simplicity, we assume that all tensors are real valued.
\begin{definition}{An $m$-mode matricization}
        \nten \tensx m $\in$ \mdims m M for $m \in  \expect M$
	is the unfolding (or flattening) of an $M^{th}$-order tensor into a matrix in the appropriate order of elements\kk{, \ie~ a tensor element $(i_1, i_2, \ldots i_M)$ maps to an element $(i_m,j)$ of a matrix as follows~\citep{Kolda09}:}
	\kk{$$ j = 1+ \sum_{k = 1, k \neq m}^{M} (i_k - 1) J_k ~~ \text{with} ~~ J_k= \prod_{\ell = 1, \ell \neq m}^{k-1} I_\ell.$$}
	 
\end{definition}
\begin{definition}{An $m$-mode product $\tensx \nmod m \bA  \  \in
	\dimnmod R m$}, given
	\tensx \ $\in$ \dims R I M and $\bA \in$ \matdim R, is defined as a tensor-matrix product in $m^\text{th}$ way:
	$$ \nten \tensy m  = \nten {(\tensx \nmod m  \bA)} m = \bA \nten \tensx m.$$
\end{definition}
\begin{definition}{A mode-($M$,1) contracted product $\tenz   = \tensx \times_{M}^{1} \tensy = \tensx \times^{1} \tensy \in$ \modnprod \R},
	for given tensors $\tensx \in \dims R I M$ and  $\tensy \in \dims R J M$, with $I_M = J_1$, yields a tensor $\tenz$ with entries  $$z_{i_1, \ldots, i_{M-1}, j_2, \ldots, j_M} = \sum_{i_M = 1}^{I_M} x_{i_1,\ldots,i_M} y_{i_M, j_2, \ldots, j_M}.$$
\end{definition}
\begin{definition}{The inner product}
	of given tensors \tensx, \tensy \ $\in$ \dims R I M is defined as
	$$\langle \tensx , \tensy \rangle = \teninner.$$
\end{definition}
\begin{definition}{The outer product}
	of given tensors $\tensx \in \dims R I M  \text{and} ~ \tensy \in \dims R J N $ generates an $\brac{M + N}^{th}-$ order tensor $\tenz = \tensx \circ \tensy$  with entries  $$z_{i_1, \ldots, i_{M}, j_1, \ldots, j_N} =  x_{i_1,\ldots,i_M} y_{j_1, \ldots, j_N}.$$
\end{definition}
\begin{definition}{The Kronecker Product}
	of matrices $\bA \in \RIJ, \bB \in \RKL$ is defined as usual by
	$$
	\bA \otimes \bB =  \begin{bmatrix}
	a_{1,1} \bB & \cdots & a_{1,J} \bB\\
	\vdots & \ddots & \vdots\\
	a_{I,1} \bB & \cdots & a_{I,J} \bB\\
	\end{bmatrix} \in \RIJKL.
	$$
	Similarly, the Kronecker product of two tensors $\tensx \in \dims R I M, \tensy \in \dims R J M$ returns a tensor $\tenz = \tensx \otimes \tensy \in \krodimten R I J$.
\end{definition}
Moreover, the Khatri-Rao product is a column-wise Kronecker product,
$$\bA \odot \bB = [\ba_1  \otimes \bb_1, \ba_2 \otimes \bb_2, \cdots, \ba_R \otimes \bb_R ] \in  \RIKR.$$
These notations are summarized in Table~\ref{notation}.
\begin{table}[ht]
	\caption{Tensor Notations.}
	\label{notation}
	\vskip 0.15in
	\begin{center}
		\begin{small}
				\begin{tabular}{lcr}
					\toprule
					Symbol & Description\\
					\midrule
					$x$    & Lower case letter for scalar value\\
					\bx  & Lower case bold letter for vector\\
					\bX    & Upper case bold letter for matrix\\
					\tensx    & Calligraphic bold letter for tensor\\
					\nten \tensx m    & Calligraphic bold letter with subscript m \\
					& for $m$-mode matricization\\
					$\circ$   & Outer product\\             
					$\otimes$      & Kronecker product\\
					$\odot$      & Khatri-Rao product\\
					$\times^{1}_{M}$ & Mode-$(M,1)$ contracted product\\
					$\langle M \rangle$ & Integer values from 1 to $M$\\
					$\langle \tensx, \tensy \rangle$      & Inner product for tensors \tensx \ and \tensy \\
					\bottomrule
				\end{tabular}
		\end{small}
	\end{center}
	\vskip -0.1in
\end{table}
\subsection{Tensor Decompositions}
Tensor decomposition methods have been significantly enhanced during the last two decades, and applied
to solve problems of varying computational complexity.
The main goal is the linear (or at most polynomial) scaling of the computational complexity in the dimension (order) of a tensor.
The key ingredient is the separation of variables via approximate low-rank factorizations.
In this paper we consider two of these decompositions.
\subsubsection{Canonical Polyadic decomposition}
The Canonical Polyadic (CP) decomposition of an $M^{th}-$order tensor $\tensx \in \dims R I M$ is a factorization into a sum of rank-one components~\citep{Hitchcock1927}, which is given element-wise as
\begin{align}\label{cpkrush}
\tenem x & \cong \sum_{r=1}^{R} \ba^{(1)}_{i_1,r} \ba^{(2)}_{i_2,r} \cdots \ba^{(M)}_{i_M,r},  \nonumber\\
 \mbox{or shortly,} ~~\qquad \quad \quad \tensx & \cong \llbracket \cpd \rrbracket,
\end{align}
where  $\mathbf{A}^{(m)} = \left[ \ba^{(m)}_{i_m,r} \right] \in \R^{I_m \times R}$, $m=1,\ldots,M$, are called \textit{factor matrices} of the CP decomposition, see~\figref{figcp}, and $R$ is called the CP-rank. The notation $\llbracket \cpd \rrbracket$ is also called the Kruskal representation of the CP factorization.
Despite the simplicity of the CP format, the problem of the
best CP approximation is often ill-posed~\citep{desilva2008}.
A practical CP approximation can be computed via the Alternating Least Squares (ALS) method~\citep{cp_als}, but the convergence may be slow.
It may also be difficult to choose the rank $R$.
\begin{figure}[ht]
	\begin{center}
		\includegraphics[scale = 1.12]{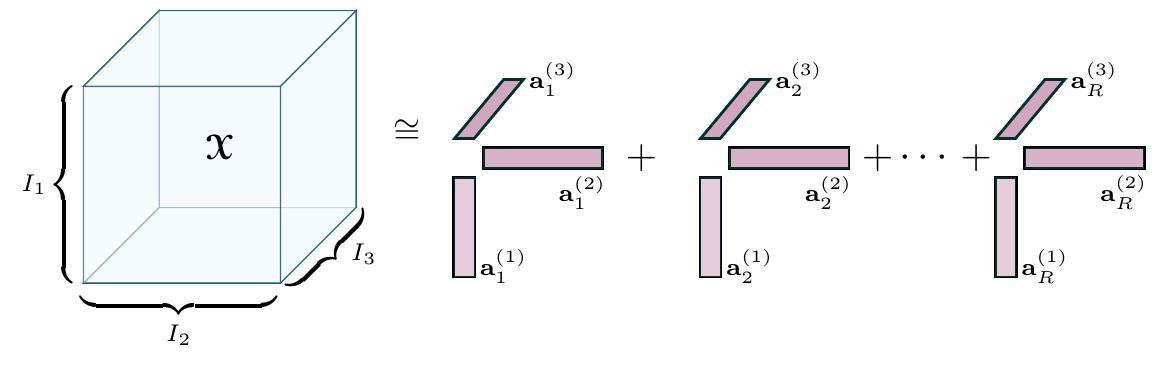}
		\caption{CP decomposition of a 3-way tensor.}
		\label{figcp}
	\end{center}
	\vskip -0.2in
\end{figure}
\subsubsection{Tensor Train decomposition}
To alleviate the difficulties of the CP decomposition mentioned above, we build
our proposed algorithm on
the Tensor Train (TT)~\citep{oseledets2011tensor} decomposition.
The TT approximation of an $M^{th}-$order tensor  $\tensx \in \dims R I M$ is defined element-wise as
\begin{align}\label{eq:tt}
\tenem x & \cong \sum_{r_0,\ldots,r_{M}}\tendtt G 1_{r_0,i_1,r_1} \tendtt G 2_{r_1,i_2,r_2} \cdots \tendtt G M_{r_{M-1},i_M,r_M}, \nonumber\\
\tensx & \cong\llangle \tendtt G 1, \tendtt G 2, \ldots, \tendtt G M \rrangle,
\end{align}
where \tendtt G m $\in$ \ttcdim m, $m=1,\ldots,M,$ are 3rd-order tensors called \emph{TT-cores} (see~\figref{figtt}), and $R_0,\ldots,R_M$ with $R_0 = R_M = 1$ are called \emph{TT-ranks}.
\begin{figure}[ht]
	\begin{center}
		\includegraphics[scale = 0.05]{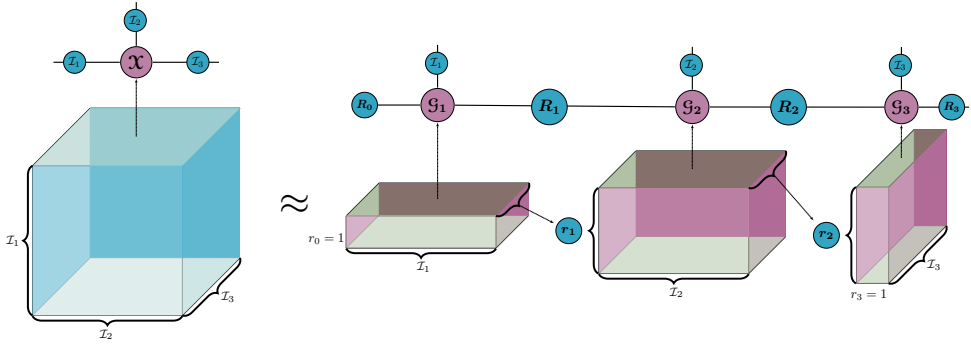}
		\caption{TT decomposition of a 3-way tensor.}
		\label{figtt}
	\end{center}   
	\vskip -0.2in
\end{figure}
The alluring capability of the TT format is its ability to perform algebraic operations directly on TT-cores avoiding full tensors.
Moreover, we can compute a quasi-optimal TT approximation of any given tensor using the SVD.
This builds on the fact that the TT decomposition constitutes a recursive matrix factorization, where each TT-rank is the matrix rank of the appropriate unfolding of the tensor, and hence the TT approximation problem is well-posed~\citep{oseledets2011tensor}.
\subsection{Support Vector Machine}\label{svm}
In this section, we recall the SVM method.
For a given training data set $\lbrace \left( \bx_i, y_i\right) \rbrace _ {i = 1}^N$, with \emph{input data} $\bx_i \in \R^m$ and \emph{labels} $y_i \in \lbrace-1,1\rbrace$,  the dual-optimization problem for the \textit{nonlinear} binary classification can be defined as,
\begin{align}\label{eq: dualPhiOpt}
&\underset{\alpha_1,\ldots,\alpha_N}{\text{max}}  \quad \sum_{i = 1}^{N} \balpha_i - \frac{1}{2} \sum_{i = 1}^{N}\sum_{j = 1}^{N} \balpha_i \balpha_j y_i y_j \langle \phi {(\bx_i)},\phi {(\bx_j)} \rangle \nonumber \\
&  \text{subject to}   \quad  0 \leq \balpha_i \leq C, \quad  \sum_{i = 1}^{N} \balpha_i y_i = 0,
\end{align}
where a tuning function $\phi$ defines the nonlinear decision boundary with $\phi\colon  \bx_i \rightarrow \phi\left( \bx_i\right)$.
In practice, we compute directly
$\langle \phi \brac{\bx_i},\phi \brac{\bx_j} \rangle$ using the so-called \emph{Kernel Trick}~\citep{bernhard2001}.
\subsubsection{Feature Map and Kernel Trick}\label{fmapkertrick}
The function $\phi \colon \R^m \rightarrow \mathbb{F}$ is called
\textbf{feature map}, and the \textit{feature space} $\mathbb{F}$ is a Hilbert Space (HS).
Every feature map is defined via a kernel such that $\nk_{i,j}=\nk \brac {\bx_i, \bx_j} = \langle \phi(\bx_i), \phi(\bx_j) \rangle_{\mathbb{F}} $. Employing the properties of the inner product, we conclude that $[\nk_{i,j}]$ is a symmetric positive semi-definite matrix.
The \emph{kernel trick} lies in
defining and computing directly $\nk\brac{\bx_i,\bx_j}$ instead of $\phi(\bx)$.
It is used to get a linear learning algorithm to learn a  \emph{nonlinear boundary}, without explicitly knowing the nonlinear function $\phi$.
The only task needed for the SVM is thus to choose a legitimate kernel function. That is how we work with the input data in the high-dimensional space while  doing all the computation in the original low dimensional space.~\figref{kertrick} illustrates the linear separation in a higher dimensional space.
\begin{figure}[ht]
	\begin{center}
		\centerline{
			\begin{tikzpicture}
			\node at (-4,0) {\includegraphics[scale= 0.15]{./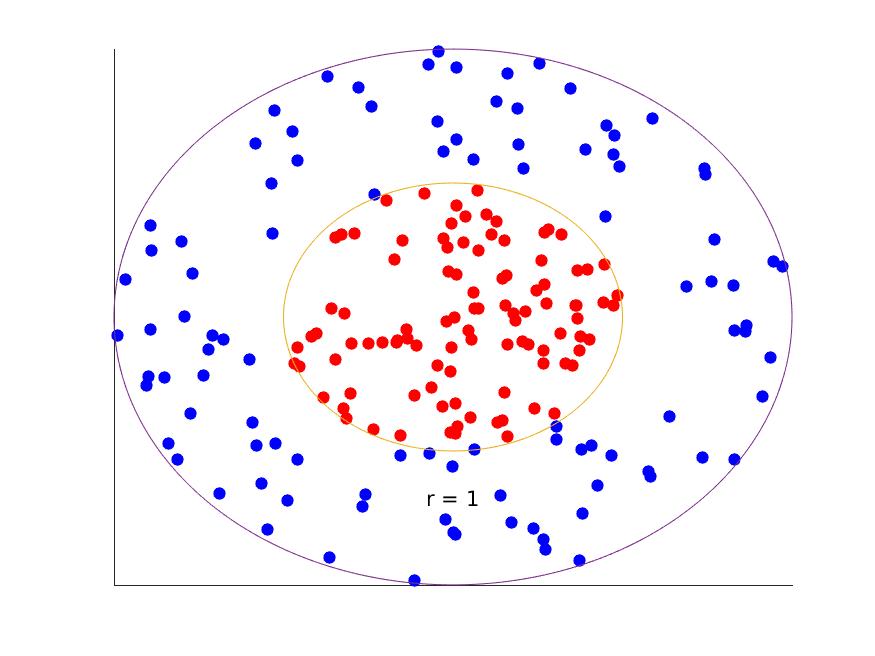}};
			\node at (4,0) {\includegraphics[scale=0.15]{./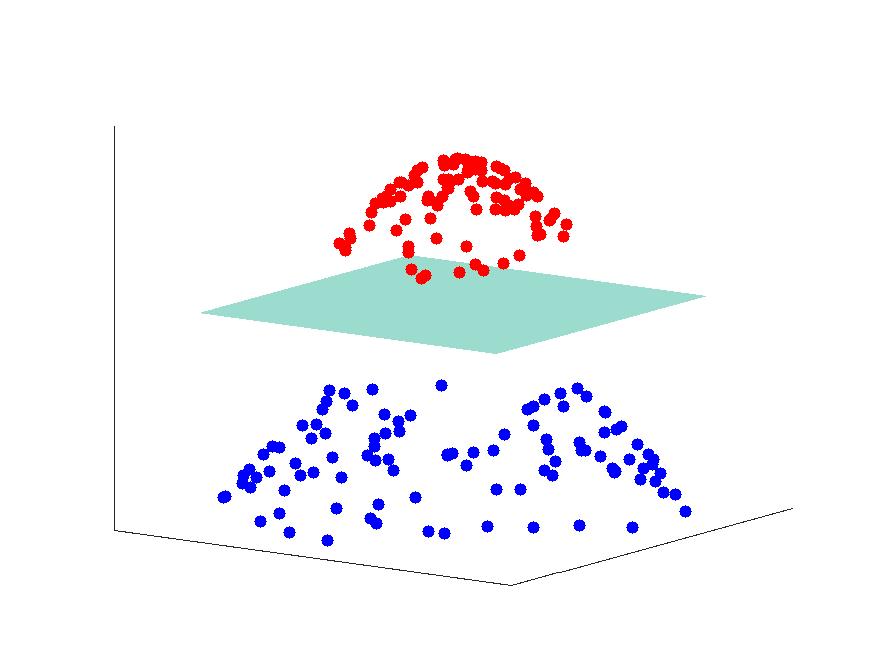}};
			\node at (0, 0.2) {$\nk(x,y)$};
			\node at (-4,-2.2) {\scalebox{0.5}{\brac{a}}};
			\node at (4,-2.2) {\scalebox{0.5}{ \brac{b}}};
			\draw[->]  (-1,0) -- (1,0);
			\end{tikzpicture}
		}
		\caption{Nonlinear mapping using kernel trick: \brac{a} Nonlinear classification of data in $\R^2$, \brac{b} Linear classification in higher dimension ($\R^3$).}
		\label{kertrick}
	\end{center}
	\vskip -0.2in
\end{figure}
\subsection{Kernelized Support Tensor Machine}\label{framektt}
In our case, we have a data set $\lbrace \left( \tensx_i, y_i\right) \rbrace _ {i = 1}^N$ with input data in the form of a tensor $\tensx_i \in \dims R I M$.
\kk{ We take the maximum margin approach to get the separation hyperplane. Hence, the objective function for a nonlinear boundary in the tensor space can be written as follows~\citep{Cai06supporttensor}:}
\kk{\begin{align}\label {eq: tenPrimOpt}
	&\underset{w,b}{\text{min}} \quad \frac{1}{2} \norm{w}^2 + C \sum_{i=1}^{N} \xi_i \\  \nonumber
	  \text{subject to} \quad & y_i (\langle \Psi(\tensx_{i}), w \rangle + b) \geq 1-\xi_{i} ~~ \quad \xi_{i}~\geq 0 ~~~ \forall i.	
\end{align}}
\kk{The classification setup given in~\eqref{eq: tenPrimOpt} is known as Support Tensor Machine (STM)~\citep{Tao05}. The dual formulation of the corresponding primal problem can be given as follows:}
\begin{align}\label {eq: tenDualOpt}
&\underset{\alpha_1,\ldots,\alpha_{N}}{\text{max}} \quad  \sum_{i = 1}^{N} \balpha_i - \frac{1}{2} \sum_{i = 1}^{N}\sum_{j = 1}^{N} \balpha_i \balpha_j y_i y_j \langle \Psi(\tensx_i),\Psi(\tensx_j) \rangle \nonumber\\
&  \text{subject to} \quad 0 \leq \balpha_i \leq C,  \quad \sum_{i = 1}^{N} \balpha_i y_i = 0 ~~ \forall i. 
\end{align}
The nonlinear feature mapping $\Psi \colon \dims R I M \rightarrow \mathbb{F}$ takes tensorial input data to a higher dimensional space similarly to the vector case. Therefore, by using the kernel trick, explained in~\secref{fmapkertrick}, STM can be defined as follows:
\begin{align} \label{eq: kerSTM}
&\underset{\alpha_1,\ldots,\alpha_{N}}{\text{max}} \quad  \sum_{i = 1}^{N} \balpha_i - \frac{1}{2} \sum_{i = 1}^{N}\sum_{j = 1}^{N} \balpha_i \balpha_j y_i y_j K(\tensx_i,\tensx_j) \nonumber \\
&  \text{subject to} \quad 0 \leq \balpha_i \leq C, \quad  \sum_{i = 1}^{N} \balpha_i y_i = 0 ~~ \forall i.
\end{align}
We call this setup the \emph{Kernelized STM (KSTM)}. \kk{Once we have the real-valued function (kernel) value for each pair of tensors, we can use state-of-the-art LIBSVM~\citep{libsvm}, which relies on the \emph{Sequential Minimal Optimization} algorithm to optimize the weights $\alpha_i$.} Hence, the preeminent part is the kernel function $K(\tensx_i,\tensx_j)$.
\kk{However, the direct treatment of large tensors can be both numerically expensive and inaccurate due to overfitting.
Therefore, we need to choose a kernel that exploits the tensor decomposition.}
In the next section we propose a particular choice of the kernel for tensor data.
\section{The Proposed Algorithm}\label{overalgo}

The first essential step towards using tensors is to approximate them in a low-parametric representation.
To achieve a stable learning model, we start with computing the TT approximations of all data tensors.
The second most expensive part is the computation of $K\brac{\tensx_{i}, \tensx_{j}}$ for each pair of tensors.
Therefore, an approximation of the kernel is required.
Besides, we would like the kernel to exploit the factorized tensor representation.
These issues are resolved in the rest of this section.
\subsection{Uniqueness Enforcing TT- SVD }\label{sec: USVD}
Since the TT decomposition is computed using the SVD~\citep{oseledets2011tensor}, the particular factors $\teng^{(1)},\teng^{(2)},\ldots, \teng^{(M)}$ are defined only up to a sign indeterminacy.
For example, in the first step, we compute the SVD of the $1$-mode matricization,
\[
\tensx_{(1)}  = \sigma_1 u_1 v_1^\top + \cdots + \sigma_{I_1} u_{I_1} v_{I_1}^\top,
\]
followed by truncating the expansion at rank $R_1$ or according to the accuracy threshold $\varepsilon$, choosing $R_1$ such that $\sigma_{R_1+1}<\varepsilon$.
However, any pair of vectors $\{u_{r_1}, v_{r_1}\}$ can be replaced by $\{-u_{r_1}, -v_{r_1}\}$ without changing the whole expansion.
While this is not an issue for data compression,
classification using TT factors can be affected significantly by this indeterminacy.
For example, tensors that are close to each other should likely produce the same label.
In contrast, even a small difference in the original data may lead to a different sign of the singular vectors.
\kk{Such a qualitative difference in the data which is actually used for classification
may significantly inflate the dimension of the feature space $\mathbb{F}$ in~\eqref{eq: tenDualOpt} required for the accurate separation of the classes.}

\kk{We fix the signs of the singular vectors as follows.
For each $r_1=1,\ldots,R_1$, we find the position of the maximum in modulus element in the left singular vector, $i_{r_1}^* = \arg\max_{i=1,\ldots,I_1} |u_{i,r_1}|$,
and make this element positive,}
\[
\kk{\bar u_{r_1}:=u_{r_1} / \mathrm{sign}(u_{i^*_{r_1},r_1}), \quad \bar v_{r_1}:= v_{r_1} \cdot \mathrm{sign}(u_{i^*_{r_1},r_1}).}
\]

\kk{Finally, we collect $\bar u_{r_1}$ into the first TT core, $\teng^{(1)}_{r_0,i_1,r_1} = \bar u_{i_1,r_1}$, and continue with the TT-SVD algorithm using $\bar v_{r_1}$ as the right singular vectors.
In contrast to the sign, the whole dominant singular terms $u_{r_1}v_{r_1}^\top$ depend continuously on the input data, and so do the maximum absolute elements. The procedure is summarized in Algorithm~\ref{UETTsvd}.}

\begin{algorithm}[ht]
	\caption{\kk{Uniqueness Enforcing TT-SVD}}
	\label{UETTsvd}
	\begin{algorithmic}[1]
		\small		
		\STATE {\bfseries Input:} $M$-dimensional tensor $\tensx \in \dims R I M$, relative error threshold $\epsilon$.
		\STATE {\bfseries Ensure:} Cores $\ttbra$ of the TT-approximation $\tensx'$ to $\tensx$ in the TT-format with
		TT- rounding ranks $r_m$ equal to the $\delta$-ranks of the unfoldings $\tensx_{(m)}$ of $\tensx$, where $\delta = \sqrt{\frac{\epsilon}{M-1}} \norm{\bA}_F$.
		\STATE Initialize $\hat \bZ_1 = \tensx_{(1)}, R_0 = 1$.
		\FOR { $m = 1$ to $M-1$}
		\STATE $\bZ_m : = \text{reshape} \left(\hat \bZ_m, [R_{m-1} I_m,~I_{m+1}\cdots I_M]\right)$
		\STATE Compute $\delta$-truncated SVD: $\bZ_m = \bU_m \bS_m \bV_m^T + \mathbf{E}_m, ~\norm{\mathbf{E}_m}_F \leq \delta$, where \\
		$\bU_m = [u_1^{(m)}, u_2^{(m)}, \ldots, u_{R_m}^{(m)}],
		\bS_m = \text{diag}(\sigma_1^{(m)}, \sigma_2^{(m)}, \ldots, \sigma_{R_m}^{(m)}), \bV_m = [v_1^{(m)}, v_2^{(m)}, \ldots, v_{R_m}^{(m)}]$
		\FOR {$r_m = 1$ to $R_m$}
		\STATE $i_{r_m}^* = \arg\max_{i=1,\ldots,R_{m-1}I_m} |u_{i,r_m}^{(m)}|$ (with ties broken to first element) \label{abs}
		\STATE $\bar u_{r_m}^{(m)}:=u_{r_m}^{(m)} / \mathrm{sign}(u_{i_{r_m}^*,r_m}^{(m)}), \quad \bar v_{r_m}^{(m)}:= v_{r_m}^{(m)} \cdot \mathrm{sign}(u_{i^*_{r_m},r_m}^{(m)})$
		\STATE $\teng^{(m)}_{r_{m-1},i_m,r_m} = \bar u_{r_{m-1}+(I_m-1)R_{m-1},~r_m}^{(m)}$, \quad $\bar \bV_m = [\bar v_1^{(m)}, \bar v_2^{(m)}, \ldots, \bar v_{R_m}^{(m)}]$ \label{ttcore}
		\ENDFOR
		\STATE $\hat \bZ_{m+1} := \bS_m \bar \bV_m^T$
		\ENDFOR 
		\STATE $\teng^{(M)} = \hat \bZ_M$
	\end{algorithmic}
\end{algorithm}

\begin{lemma}\label{lem}
\kk{Assume that the singular values $\sigma_1^{(m)}, \ldots, \sigma_{R_m}^{(m)}$ are simple for each $m=1,\ldots,M-1$.
Then Algorithm~\ref{UETTsvd} produces the unique TT decomposition.}
\end{lemma}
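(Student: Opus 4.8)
The plan is to argue by induction on the index $m$ of the outer loop of Algorithm~\ref{UETTsvd}, showing that at every step the matrix handed to the SVD is uniquely determined by the input, and that the sign-fixing rule collapses the only remaining degree of freedom of the factorization. The statement ``produces the unique TT decomposition'' I read as: the returned cores $\teng^{(1)},\ldots,\teng^{(M)}$ are a deterministic function of $\tensx$ and $\epsilon$, independent of any internal choices made by the SVD routine. Since the classical TT-SVD is determined up to the sign of each singular-vector pair, and the simplicity hypothesis forbids any richer ambiguity, it suffices to show that the sign choice is pinned down.

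First I would isolate the elementary linear-algebra fact in play. If the truncated SVD $\bZ_m = \bU_m \bS_m \bV_m^\top$ retains simple singular values, then the retained singular values are uniquely determined by $\bZ_m$; moreover, simplicity of $\sigma_{R_m}^{(m)}$ forces $\sigma_{R_m}^{(m)} > \sigma_{R_m+1}^{(m)}$, so the truncation rank $R_m$ and the retained left/right singular subspaces are unambiguous. Because each retained singular value is a simple eigenvalue of $\bZ_m \bZ_m^\top$, its unit left singular vector $u_r$ is unique up to a sign, and $v_r = \bZ_m^\top u_r / \sigma_r$ is then determined; flipping $u_r \mapsto -u_r$ flips $v_r \mapsto -v_r$. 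Hence the entire freedom in factoring $\bZ_m$ is a vector of column signs in $\{\pm 1\}^{R_m}$.

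Next I would verify that the sign-fixing step (lines~\ref{abs}--\ref{ttcore}) is well-defined and \emph{invariant} under that sign flip, which is where the real content lies. As $u_r$ is a unit vector, its largest-modulus entry is strictly positive, so $i^*_r = \arg\max_i |u_{i,r}|$ is attained and $u_{i^*_r,r}\neq 0$; the rule ``ties broken to the first element'' then makes $i^*_r$ a deterministic function of the \emph{unordered} pair $\{u_r,-u_r\}$, since $|{-u_{i,r}}| = |u_{i,r}|$ leaves the set of maximisers, and hence its first element, unchanged. Therefore $\bar u_r = u_r/\mathrm{sign}(u_{i^*_r,r})$ carries a positive entry in position $i^*_r$ no matter which of $\pm u_r$ the routine returned, while the matching rescaling of $v_r$ preserves the rank-one term $u_r v_r^\top = \bar u_r \bar v_r^\top$. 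Consequently $\bar\bU_m,\bar\bV_m$, and thus the core $\teng^{(m)}$ built from $\bar\bU_m$ and the carry-over $\hat\bZ_{m+1} = \bS_m \bar\bV_m^\top$, are uniquely determined by $\bZ_m$.

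Finally I would close the induction. The base case is $\hat\bZ_1 = \tensx_{(1)}$, fixed by $\tensx$; assuming $\hat\bZ_m$ is unique, the reshape $\bZ_m$ is unique, its SVD yields a unique $\bS_m$ together with sign-fixed $\bar\bU_m,\bar\bV_m$ by the two steps above, so $\teng^{(m)}$ and $\hat\bZ_{m+1}$ are unique; the terminal assignment $\teng^{(M)} = \hat\bZ_M$ then inherits uniqueness. The step I expect to require the most care is the invariance argument of the third paragraph — establishing that the $\arg\max$-with-tie-breaking rule yields a sign selection that is genuinely independent of the input sign $(u_r,v_r)\mapsto(-u_r,-v_r)$; the rest is bookkeeping across the loop. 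It is worth one closing sentence that simplicity is exactly what excludes rotational (rather than mere sign) ambiguity inside repeated singular subspaces, so the hypothesis cannot be dropped.
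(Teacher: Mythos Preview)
Your proposal is correct and follows essentially the same route as the paper: induction on the loop index, simplicity of singular values reduces the SVD freedom to column signs, and the $\arg\max$-with-tie-breaking rule eliminates that sign freedom, with the carry-over $\hat\bZ_{m+1}$ propagating uniqueness. You are somewhat more explicit than the paper in verifying that the tie-breaking rule is genuinely invariant under $u_r\mapsto -u_r$ and that simplicity forces a strict gap at the truncation index, but the underlying argument is the same.
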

\begin{proof}
The $m$-th TT core produced in TT-SVD is a reshape of the left singular vectors of the Gram matrix of the current unfolding,
$
\bA_m := \bZ_m \bZ_m^\top.
$
To set up an induction, we notice that $\bZ_1 = \hat \bZ_1 = \tensx_{(1)}$ is unique,
and assume that $\bZ_m$ is unique too.
Consider the eigenvalue decomposition
$\bA_m \bU_m = \bU_m \Lambda_m$,
$\Lambda_m = \text{diag}(\lambda_1^{(m)}, \ldots, \lambda_{R_{m-1}I_m}^{(m)})$.
Since the eigenvalues $\lambda_{i}^{(m)} = (\sigma_{i}^{(m)})^2$ are simple,
each of them corresponds to an eigenspace of dimension $1$, spanned by the corresponding column of $\bU_m$.
This means that each eigenvector is unique up to a scalar factor, and, if the eigenvector is real and has Euclidean norm, the scalar factor can only be $1$ or $-1$.
The latter is unique if we choose it as the sign of the largest in modulus element of the eigenvector (which is always nonzero), with ties broken to take the first of identical elements.
It remains to establish the uniqueness of $\bZ_{m+1}$ to complete the induction.
By the orthogonality of $\bar\bU_m=[\bar u_1^{(m)}, \ldots, \bar u_{R_m}^{(m)}]$,
we get $\hat \bZ_{m+1} = \bar\bU_{m}^T \bZ_{m}$, and since the reshape is unique, so is $\bZ_{m+1}$.
\end{proof}

%

\begin{remark}
\kk{Most of the data featuring in machine learning are noisy.
Therefore, the singular values of the corresponding matricizations are simple almost surely, and hence the TT decomposition delivered by Algorithm~\ref{UETTsvd} is unique almost surely.}
\end{remark}

\subsection{TT-CP Expansion}\label{TT-CP}
Despite the difficulties in \emph{computing} a CP approximation, its simplicity makes the CP format a convenient and powerful tool for revealing hidden classification features in the input data.
However, as long as the TT decomposition is available, it can be
converted into the CP format suitable for the kernelized classification.


\begin{proposition}\label{lem2}
\kk{For a given TT decomposition~\eqref{eq:tt}, we can write a CP decomposition}
\begin{equation}\label{eq:TT-CP}
\sum_{r_0,\ldots,r_{M}}\tendtt G 1_{r_0,i_1,r_1} \tendtt G 2_{r_1,i_2,r_2} \cdots \tendtt G M_{r_{M-1},i_M,r_M} =  \sum_{r=1}^{R} \hat H^{(1)}_{i_1,r} \hat H^{(2)}_{i_2,r} \cdots \hat H^{(M)}_{i_M,r}
\end{equation}
\kk{by merging the ranks $r_1, r_2, \ldots r_M$ into one index $r = r_1 + (r_2-1) R_1 + \ldots + (r_M-1) \prod_{\ell=1}^{M-1} R_{\ell}$, $r=1,\ldots,R$, $R=R_1\cdots R_M$, and introducing the CP factors}
$$
\kk{\hat H^{(m)}_{i_m,r}  = {\teng^{(m)}_{r_{m-1},i_m,r_m}}, \quad m = 1, \ldots, M.}$$
\end{proposition}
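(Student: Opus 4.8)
The plan is to show that the right-hand side of~\eqref{eq:TT-CP} is merely a reindexed copy of the left-hand side, so that the two expressions agree term by term with no approximation involved. The essential observation is that the merging rule
$$(r_1,\ldots,r_M) \mapsto r = r_1 + (r_2-1)R_1 + \cdots + (r_M-1)\prod_{\ell=1}^{M-1}R_\ell$$
is a bijection between the product set $\{1,\ldots,R_1\}\times\cdots\times\{1,\ldots,R_M\}$ and $\{1,\ldots,R\}$, where $R=R_1\cdots R_M$. This is the standard column-major (mixed-radix) linearization of a multi-index; I would confirm it is a bijection by noting that each $r$ recovers the digits $r_1,\ldots,r_M$ uniquely (injectivity), while the domain and codomain both have cardinality $R$.

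First I would fix an index $r\in\{1,\ldots,R\}$ and let $(r_1,\ldots,r_M)$ denote the unique tuple it encodes, adopting the boundary conventions $r_0=1$ and $r_M=1$ forced by $R_0=R_M=1$. By the definition of the CP factors, $\hat H^{(m)}_{i_m,r}=\teng^{(m)}_{r_{m-1},i_m,r_m}$ for each $m$, so forming the product $\hat H^{(1)}_{i_1,r}\hat H^{(2)}_{i_2,r}\cdots\hat H^{(M)}_{i_M,r}$ reproduces exactly the TT summand $\teng^{(1)}_{r_0,i_1,r_1}\teng^{(2)}_{r_1,i_2,r_2}\cdots\teng^{(M)}_{r_{M-1},i_M,r_M}$ associated with that same tuple. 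Summing over $r$ and using the bijection to relabel the single sum as the nested sum over $(r_1,\ldots,r_M)$ then yields~\eqref{eq:TT-CP}.

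The key point I would stress is that the contraction coupling of the TT format is preserved automatically. Each intermediate rank $r_m$, $1\le m\le M-1$, occurs both as the trailing index of $\teng^{(m)}$ inside $\hat H^{(m)}$ and as the leading index of $\teng^{(m+1)}$ inside $\hat H^{(m+1)}$; since both factors read $r_m$ off the same combined index $r$, these two occurrences are forced to coincide, which is precisely the summation that glues adjacent TT cores together. No further structural constraint on the CP factors is needed.

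The main obstacle is purely bookkeeping rather than anything deep: one must verify that the single running index $r$ simultaneously encodes every contracted TT rank and that the endpoint conventions $r_0=r_M=1$ are consistent with the factor definitions for $m=1$ and $m=M$. Once the bijection is established, the claim is an identity between individual summands, so it holds exactly for any TT-ranks and requires no norm or convergence estimate.
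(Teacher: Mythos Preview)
Your argument is correct and is essentially what the paper has in mind. In fact, the paper does not give a formal proof of this proposition at all; it simply remarks that ``this transformation is free from any new computations, and needs simply rearranging and replicating the original TT cores.'' Your write-up supplies the obvious details behind that remark: the mixed-radix map $(r_1,\ldots,r_M)\mapsto r$ is a bijection onto $\{1,\ldots,R\}$, and with $\hat H^{(m)}_{i_m,r}=\teng^{(m)}_{r_{m-1},i_m,r_m}$ each CP summand equals the corresponding TT summand because adjacent factors read the shared contraction index $r_m$ from the same $r$. There is nothing to add or correct.
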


\kk{This transformation is free from any new computations, and needs simply rearranging and replicating the original TT cores. Although this expansion is valid for arbitrary dimension, higher dimensions may increase the number of terms massively.
However, many experimental datasets are usually three or four dimensional tensors, for which the TT-CP expansion is feasible.}

\kk{Note that the number of terms $R$ in the CP decomposition~\eqref{eq:TT-CP} can be larger than the minimal CP rank of the exact CP decomposition of the given tensor.
However, the nonlinear kernel function is more sensitive to the features of the data rather than the number of CP terms \emph{per se}.
In the numerical tests, we observe that the expansion~\eqref{eq:TT-CP} gives actually a better classification accuracy than an attempt to compute an optimal CP approximation using an ALS method.
}

\subsection{Norm Equilibration}\label{sec:normequ}
In our preliminary experiments, we tried using directly the TT-CP expansion as above with the CP kernel from~\citep{MMK}.
However, this did not lead to better classification results.
The DuSK kernel~\citep{MMK}
introduces the same width parameter for all CP factors.
This requires all CP factors to have identical (or at least close) magnitudes.
In contrast, different TT cores have different norms in the plain TT-SVD algorithm~\citep{oseledets2011tensor}.
Here, we rescale the TT-CP expansion
to ensure that the columns of the CP factors have equal norms,
and hence produce the same kernel features with the same width parameter.
We have found this to be a key ingredient for the successful TT-SVM classification.

Given a rank-$r$ TT-CP decomposition $\llbracket \hat H^{(1)}, \hat H^{(2)},\cdots, \hat H^{(M)}\rrbracket$,
we compute the total norm of each of the rank-1 tensors
\begin{equation}
\kk{n_r  = \norm{\hat H_{r}^{(1)}} \cdots \norm{\hat H_{r}^{(M)}},\label{norm1}}
\end{equation}
and distribute this norm equally among the factors,
\begin{equation}
\kk{H^{(m)}_{r} :=  \frac{\hat H_{r}^{(m)}}{\norm{\hat H_{r}^{(m)}}} \cdot n_r^{1/M}, \qquad m=1,2, \cdots, M.\label{norm2}}
\end{equation}


\subsection{Noise-aware Threshold and Rank Selection}
Generally, data coming from real world applications are affected by measurement or preprocessing noise.
This can affect both computational and modeling aspects,
increasing the TT ranks (since a tensor of noise lacks any meaningful TT decomposition), and spoiling the classification if the noise is too large.
However, the SVD can serve as a de-noising algorithm automatically:
the dominant singular vectors are often ``smooth'' and hence represent a useful signal, while the latter singular vectors
are more oscillating and capture primarily the noise.
Therefore, it is actually beneficial to compute the TT approximation with deliberately low TT ranks / large truncation threshold.
On the other hand, the TT rank must not be too low in order to approximate the features of the tensor with sufficient accuracy.
Cross-validation is a technique to evaluate the effectiveness of the model, which is done by re-sampling the data into training-testing data sets. Since the precise magnitude of the noise is unknown,
we carry out a k-fold cross-validation test (k = 5) to find the optimal TT rank. 

\subsection{Nonlinear Mapping}\label{kermapping}
Equipped with the homogenized TT-CP decompositions of the input tensors,
we are ready to define a nonlinear kernel function.
We follow closely the rationale behind DuSK proposed in~\citet{DuSK,MMK} \kk{and generalize it for tensors of arbitrary dimension}.
We assume that the feature map function \kk{from the space of tensors to a \emph{tensor product Reproducing Kernel Hilbert Space}~\citep{Signoretto2013} $\Psi  \colon \mathbb{R}^{I_1} \times \cdots \times \mathbb{R}^{I_M} \mapsto \mathbb{F}$} consists of separate feature maps acting on different CP factors,
\kk{
\begin{align}\label{eq:NM}
\Psi & \colon\hspace{-0.1cm}\sum_{r = 1}^{R} H_r^{(1)}\otimes H_r^{(2)}\otimes \cdots \otimes H_r^{(M)}\mapsto\sum_{r = 1}^{R} \phi(H_r^{(1)})\otimes \phi(H_r^{(2)})\otimes \cdots \otimes  \phi(H_r^{(M)}).
\end{align}}
This allows us to exploit the fact that the data is given in the CP format to aid the classification.
However, the feature function $\phi(\ba)$ is to be defined implicitly through a kernel function.
Similarly to the standard SVM, applying the kernel trick to~\eqref{eq:NM} gives us a practically computable kernel.
Given CP approximations of two tensors $\tensx = [x_{i_1,\ldots,i_M}]$ and $\tensy = [x_{i_1,\ldots,i_M}]$,
\begin{align*}
x_{i_1,\ldots,i_M} \approx \sum_{r=1}^{R} H^{(1)}_{i_1,r} H^{(2)}_{i_2,r} \cdots  H^{(M)}_{i_M,r}, \qquad
y_{i_1,\ldots,i_M} \approx \sum_{r=1}^{R} P^{(1)}_{i_1,r} P^{(2)}_{i_2,r} \cdots  P^{(M)}_{i_M,r},
\end{align*}
we compute
\kk{
\begin{align}\label{eq:kerAPRX}
\langle \Psi(\tensx), \Psi(\tensy) \rangle & =  K(\tensx,\tensy)\nonumber\\
& = K \left( \sum_{r=1}^{R} H_r^{(1)}\otimes H_r^{(2)}\otimes \cdots \otimes H_r^{(M)}, \sum_{r=1}^{R} P_r^{(1)}\otimes P_r^{(2)}\otimes \cdots \otimes P_r^{(M)}  \right),\nonumber\\
& = \langle{\Psi(\sum_{r=1}^{R} H_r^{(1)}\otimes H_r^{(2)}\otimes \cdots \otimes H_r^{(M)}) ,\Psi(\sum_{r=1}^{R} P_r^{(1)}\otimes P_r^{(2)}\otimes \cdots \otimes  P_r^{(M)})}\rangle \nonumber\\
& = \sum_{i,j=1}^{R}\langle\phi(H_{i}^{(1)}),\phi(P_{j}^{(1)})\rangle \langle\phi(H_{i}^{(2)}),\phi(P_{j}^{(2)})\rangle \cdots  \langle\phi(H_{i}^{(M)}),\phi(P_{j}^{(M)})\rangle \nonumber\\
& = \sum_{i,j = 1}^{R}\nk(H_{i}^{(1)},P_{j}^{(1)}) \nk(H_{i}^{(2)},P_{j}^{(2)}) \cdots \nk(H_{i}^{(M)},P_{j}^{(M)}),
\end{align}}
where \begin{center}
 $ \quad \nk(\bh,\bp)  =  \exp\left(-\frac{\norm{{\bh - \bp}}^2}{2 \sigma^2}\right).$
\end{center}

This kernel approximation is computed for each pair of the tensor input data, represented by its CP factors.
The width parameter $\sigma>0$ needs to be chosen judiciously to ensure accurate learning.

Since the entire calculation starts from the TT decomposition,
we call this proposed model the \emph{Tensor Train Multi-way Multi-level Kernel (TT-MMK)}.
It fulfills the objectives of extracting optimal low-rank features, and of building a more accurate and efficient classification model.
Plugging the kernel values~\eqref{eq:kerAPRX} into the STM optimizer \eqref{eq: kerSTM} completes the algorithm.
The overall idea is summarized in Algorithm~\ref{alg:KTTCP}.

\begin{algorithm}[ht]
	\caption{TT-CP approximation of the STM Kernel}
	\label{alg:KTTCP}
	\begin{algorithmic}
		\STATE {\bfseries Input:} data ${\{\tensx_n\}}_{n = 1}^{N}  \dims R I M$, TT-rank $R$. 
		\STATE {\bfseries Output:} Kernel matrix approximation $\left[K(\tensx_u,\tensx_v)\right] \in \R^{N\times N}$
		\FOR{$n = 1$ {\bfseries to} $N$}
		\STATE Compute TT approximation $\tensx_n  \cong \llangle \teng^{(1,n)}, \teng^{(2,n)}, \cdots,  \teng^{(M,n)} \rrangle$ using Algorithm~\ref{UETTsvd}.
		\STATE Compute TT-CP expansion $\llbracket H^{(1,n)}, H^{(2,n)}, \cdots, H^{(M,n)}\rrbracket = \llangle \teng^{(1,n)}, \teng^{(2,n)},\cdots,  \teng^{(M,n)} \rrangle$ with equilibrated norms.
		\ENDFOR
		\FOR{$u, v = 1$ {\bfseries to} $N$}
		\STATE $K\left(\tensx_u,\tensx_v\right) \approx$ 
		$\sum_{i,j = 1}^{R}\nk(H_{i}^{(1,u)},H_{j}^{(1,v)}) \nk(H_{i}^{(2,u)},H_{j}^{(2,v)}) \cdots \nk(H_{i}^{(M,u)},H_{j}^{(M,v)}).$
		\ENDFOR	
	\end{algorithmic}
\end{algorithm}

\section{Numerical Tests}\label{numel}
\begin{itemize}\setlength\itemsep{0.5em}
	\item \textbf{Experimental Settings}\\
	All numerical experiments have been done in \texttt{MATLAB 2016b}. In the first step, we compute the TT format of an input tensor using the \texttt{TT-Toolbox\footnote{\url{https://github.com/oseledets/TT-Toolbox}}}, where we modified the function \verb+@tt_tensor/round.m+ to enforce the uniqueness enforcing TT-SVD (\secref{sec: USVD}).
	Moreover, we have implemented the TT-CP conversion, together with the norm equilibration.
	For the training of the TT-MMK model, we have used the \texttt{svmtrain} function available in the \texttt{LIBSVM\footnote{\url{https://www.csie.ntu.edu.tw/~cjlin/libsvm/}}} library.
	We have run all experiments on a machine equipped with Ubuntu release 16.04.6 LTS 64-bit, 7.7 GiB of memory, and an Intel Core i5-6600 CPU @ 3.30GHz$\times$4 CPU. The codes are available publicly on \texttt{GitHub\footnote{\url{https://github.com/mpimd-csc/Structure-preserving_STTM}}}.
	\item \textbf{Parameter Tuning}\\
	The entire TT-SVM model depends on three parameters. First, to simplify the selection of TT ranks, we take all TT ranks equal to the same value $R \in \lbrace 1,2, \ldots 10 \rbrace$. Another parameter is the width of the Gaussian Kernel $\sigma$.
	Finally, the third parameter is a trade-off constant $C$ for the KSTM optimization technique \eqref{eq: kerSTM}. Both $\sigma$ and $C$ are chosen from $\lbrace 2^{-8},2^{-7}, \ldots, 2^{7}, 2^{8} \rbrace$. For tuning $R,\sigma$ and $C$ to the best classification accuracy, we use the \emph{$k$-fold cross validation} with $k=5$. Along with this, we repeat all computations  50 times and average the accuracy over these runs.
	This ensures a confident and reproducible comparison of different techniques.
\end{itemize}
\subsection{Data Collection}
\begin{enumerate}\setlength\itemsep{0.5em}
	\item \kk{Resting-state fMRI Datasets}
	\begin{itemize}
		\item \textbf{Alzheimer Disease (ADNI):} The ADNI\footnote{\url{http://adni.loni.usc.edu/}} stands for Alzheimer Disease Neuroimaging Initiative. It contains the resting state fMRI images of 33 subjects. The data set was collected from the authors of the paper~\citep{MMK}. The images belong to either Mild Cognitive Impairment (MCI) with Alzheimer Disease (AD), or normal controls. Each image is a tensor of size $61 \times 73 \times 61$, containing $271633$ elements in total. The AD+MCI images are labeled with $-1$, and the normal control images are labeled with $1$. Preprocessing of the data sets is explained in~\citep{DuSK}.
		
		\item \textbf{Attention Deficit Hyperactivity Disorder (ADHD):} The ADHD data set is collected from the ADHD-200 global competition data set\footnote{\url{http://neurobureau.projects.nitrc.org/ADHD200/Data.html}}. It is a publicly available preprocessed fMRI data set from eight different institutes, collected at one place. The original data set is unbalanced, so we have chosen 200 subjects randomly, ensuring that 100 of them are ADHD patients (assigned the classification label $-1$) and the 100 other subjects are healthy (denoted with label $1$). Each of the 200 resting state fMRI samples contains $49  \times 58 \times 47  = 133574$ voxels.
	\end{itemize}
	\item \kk{Hyperspectral Image (HSI) Datasets: 
	We have taken the} \verb+mat+ \kk{file for both the datasets and their corresponding labels\footnote{\url{http://www.ehu.eus/ccwintco/index.php/Hyperspectral\_Remote\_Sensing\_Scenes}}. The following datasets have three dimensional tensor structure of different sizes, where each tensor data point represents a pixel value. Therefore, for our experiment we have taken a patch of size $5 \times 5$ for two different pixel values, in order to get a binary classification dataset. }
	\begin{itemize}
		\item \kk{\textbf{Indian Pines:} The HSI images were collected via the Aviris Sensor\footnote{\url{https://aviris.jpl.nasa.gov/}} over the Indian Pines test site. The size of the dataset is $145 \times 145$ pixels over 224 spectral values. Hence, the size of the tensor data is $145 \times 145 \times 224$. The mat file we have collected for our experiment has reduced band size 200. This excludes bands covering the region of water absorption: [104-108], [150-163]. The original dataset contains 16 different labels to identify different corps and living areas. We have taken only 50 datapoints for each of the two labels 11 (Soybean-mintill ) and 7 (Grass-pasture-mowed). }
		
		\item \kk{\textbf{Salinas:} This HSI images data was collected by 224 band Aviris Sensor over Salinas valley, California. Similar to Indian Pines, in this case, we have also collected samples  for two GroundTruths, namely 9 (Soil-vinyard-develop) and 15 (Vinyard-untrained) each with 50 datapoints. The size of the dataset is $512 \times 217$ pixels over 224 spectral values. Hence, the size of the tensor data is $512 \times 217 \times 224$. }
	\end{itemize}
\end{enumerate}

\begin{figure}[ht]
	\begin{center}
		\centering
		\subfloat[\kk{GroundTruth of Indian Pines dataset} ]{\label{fig:hsispatial}\includegraphics[width=.5\linewidth]{./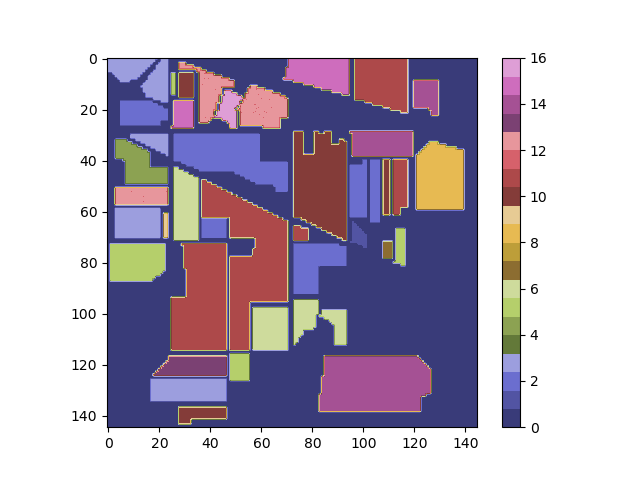}}
		\subfloat[\kk{GroundTruth of Salinas dataset}]{\label{fig:fmrispatial}	
		\includegraphics[scale = 0.45]{./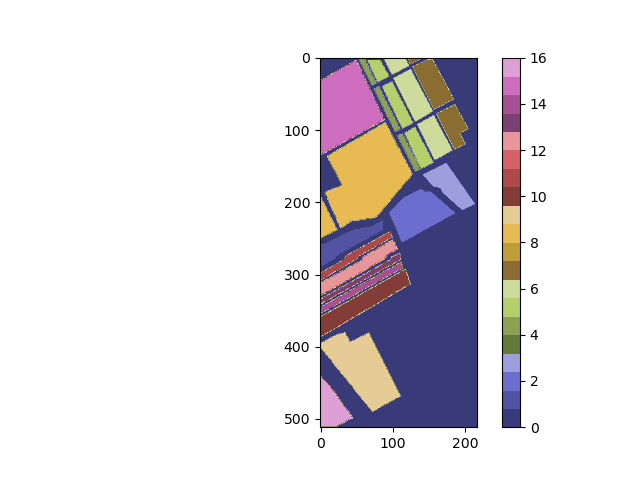}}
	\end{center}   
	\vskip -0.2in
\end{figure}
\subsection{Influence of Individual Algorithmic Steps}\label{methodology}
In the first test we investigate the impact of each individual transformation of the TT decomposition, outlined in~\secref{sec: USVD}--\secref{sec:normequ}.
In~\figref{fig:a}, we apply a counterpart of the DuSK kernel~\eqref{eq:kerAPRX}
directly to the initial TT approximation of the data tensors.
Given TT decompositions
$$
x_{i_1,i_2,i_3} = \sum_{r_1,r_2=1}^{R_1,R_2} \teng^{(1)}_{i_1,r_1} \teng^{(2)}_{r_1,i_2,r_2} \teng^{(3)}_{r_2,i_3} \quad\mbox{and}\quad
y_{i_1,i_2,i_3} = \sum_{t_1,t_2=1}^{R_1,R_2} \tens^{(1)}_{i_1,t_1} \tens^{(2)}_{t_1,i_2,t_2} \tens^{(3)}_{t_2,i_3},
$$
we compute a separable kernel similarly to~\eqref{eq:kerAPRX} via
\begin{equation}\label{eq:ttdusk}
k(\tensx, \tensy) = \sum_{r_1,t_1=1}^{R_1} \sum_{r_2,t_2=1}^{R_2} \nk(\teng^{(1)}_{r_1}, \tens^{(1)}_{t_1}) \nk(\teng^{(2)}_{r_1,r_2}, \tens^{(2)}_{t_1,t_2}) \nk(\teng^{(3)}_{r_2}, \tens^{(3)}_{t_2}).
\end{equation}
\kk{A similar approach was also proposed recently in~\citet{chen2020kernelized}.
In~\figref{fig:a} we have compared the difference between applying the DuSK kernel with TT decomposition and uniqueness enforced TT-SVD decomposition. However, we observe a rather poor classification accuracy both with and without uniqueness enforcing TT-SVD  (\secref{sec: USVD}). }


\kk{Applying TT-UoSVD DuSK was not sufficient to produce the state-of-the-art results. Therefore, in~\figref{fig:b} we convert the TT decomposition into the CP format as per~\secref{TT-CP}.
This makes a more significant difference, in particular it slightly increases the accuracy and its distribution is much more uniform across the TT ranks.
In~\figref{fig:c}, both  TT and TTCP with uniqueness enforced constraint  indicates that although UoSVD helps to stabilize the decomposition, the 'naive' TT DuSK~\eqref{eq:ttdusk} suffers from inhomogeneous contribution of the TT cores of different size because DuSK relies essentially on the number of univariate kernel terms being equal across different variables.}

\kk{To equilibrate the scales of different terms even further,
in~\figref{fig:d} we enable the norm distribution as shown in~\secref{sec:normequ}. This gives the highest classification accuracy,
which relies on the internal features of the images instead of the scale indeterminacy of their representations. The comparison in~\figref{fig:d} makes the impact of UoSVD also dominant to achieve the state-of-the-art.
We compare our algorithm to other methods next. }

 \begin{figure}[ht]
 	\centering
 	\subfloat[TT with and without enforced uniqueness ]{\label{fig:a}\includegraphics[width=.5\linewidth]{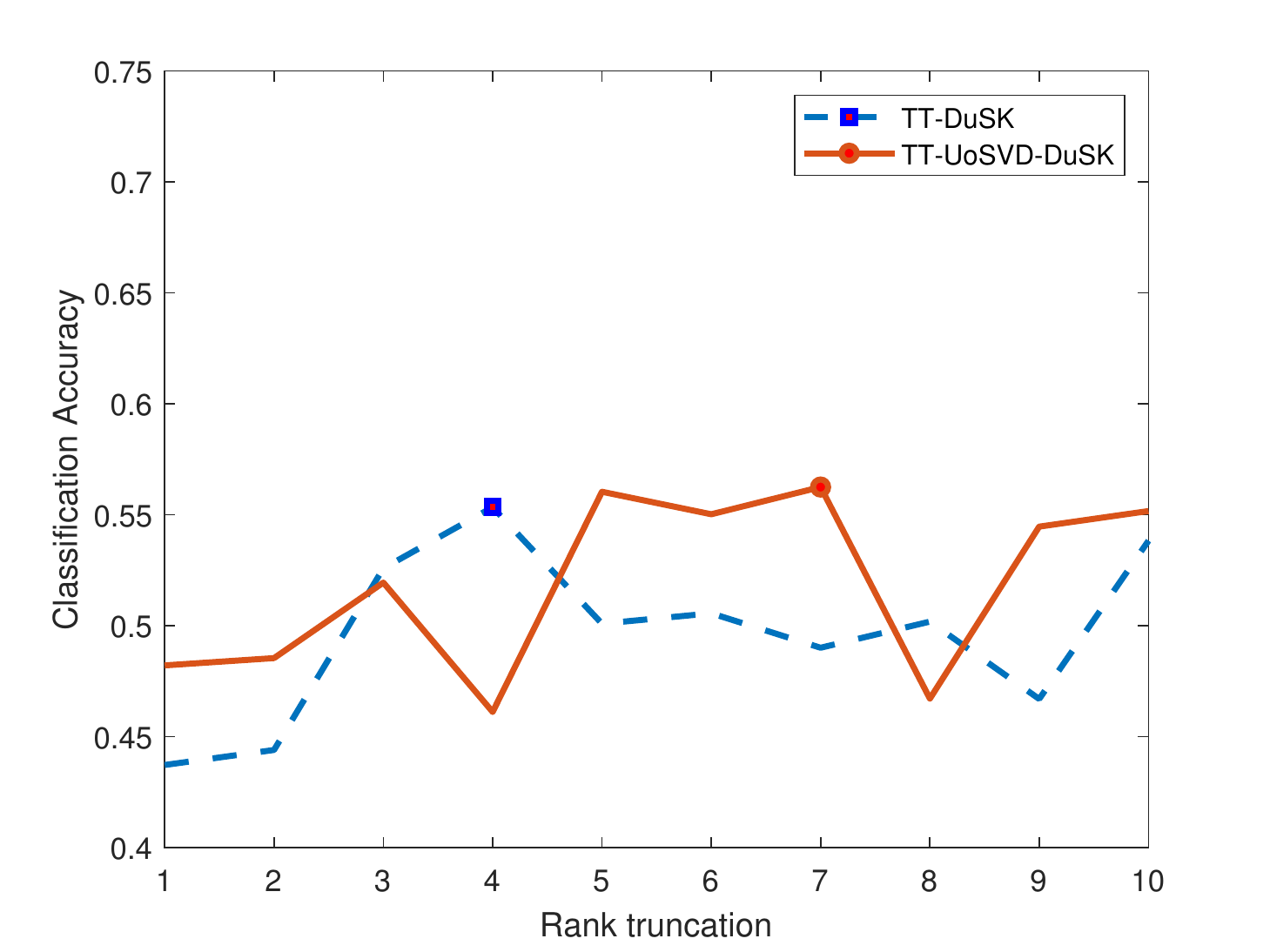}}
 	\subfloat[TTCP with and without enforced uniqueness]{\label{fig:b}	\includegraphics[width=.5\linewidth]{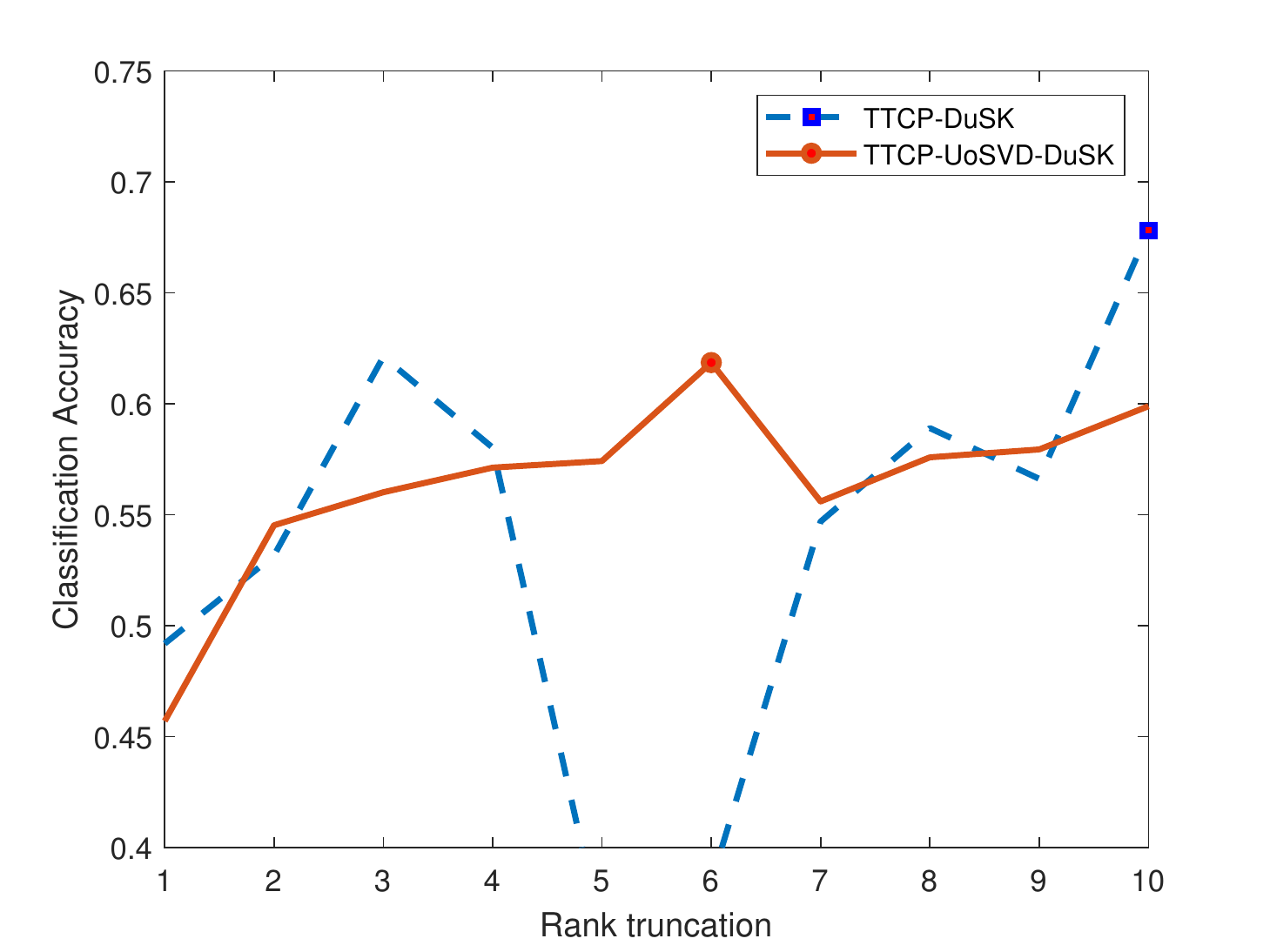}}
 	\newline
 	\subfloat[ TT vs TTCP with enforced uniqueness]{\label{fig:c}	\includegraphics[width=.5\linewidth]{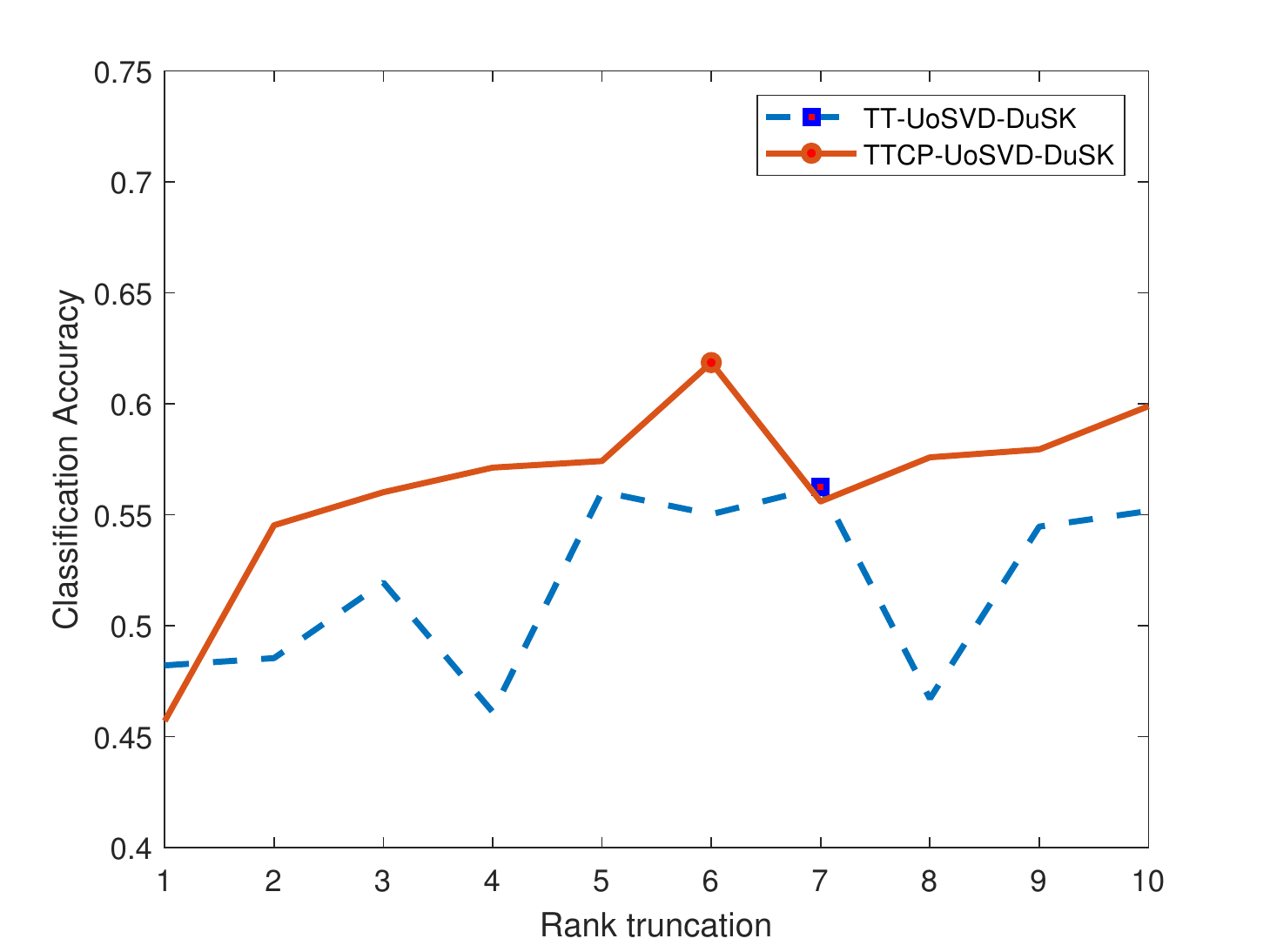}}
 	\subfloat[TTCP norm equilibrium with and without eforced uniqueness]{\label{fig:d}	\includegraphics[width=.5\linewidth]{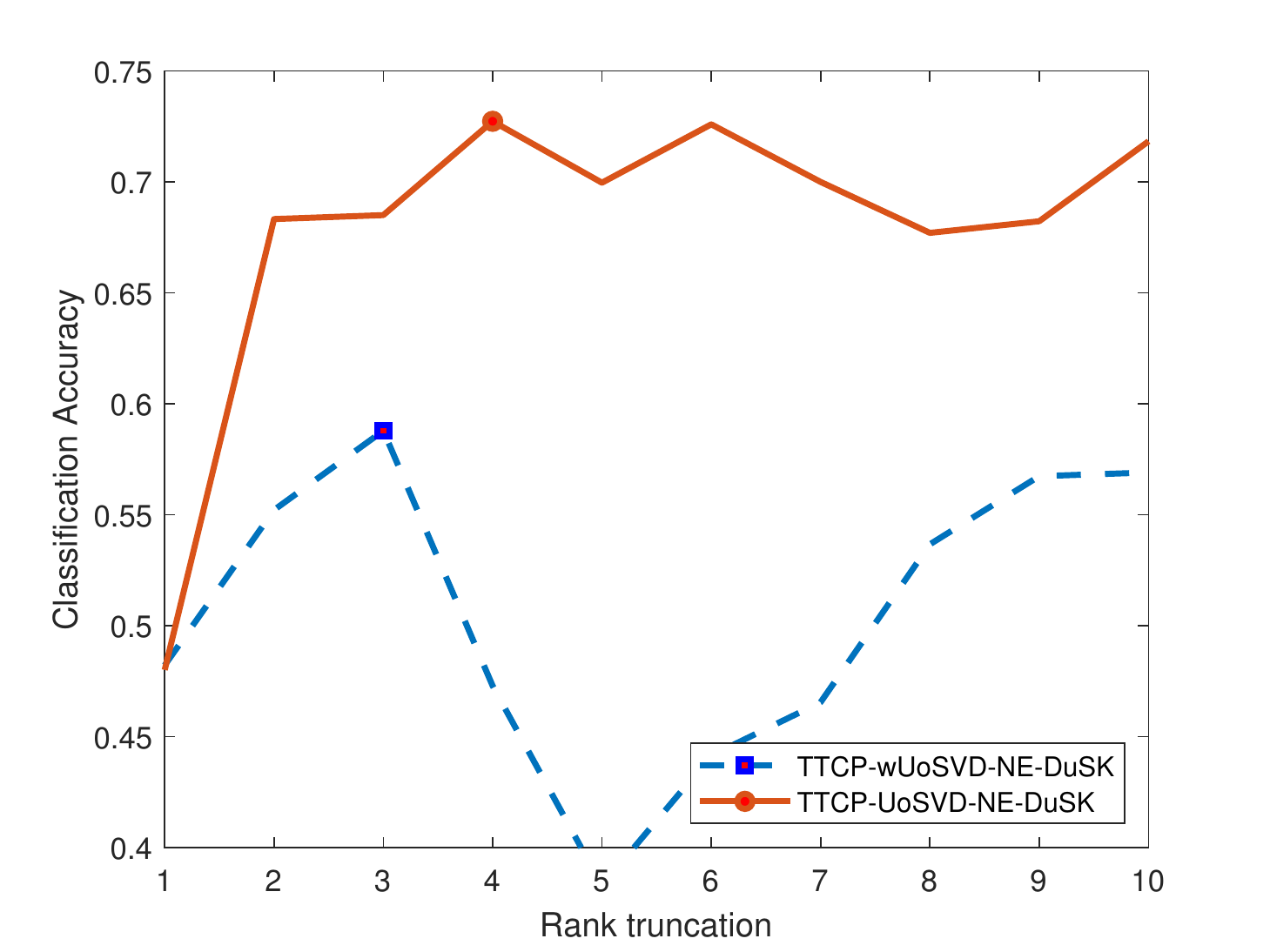}}
 	\caption{\kk{Effect of each step of~\secref{overalgo} on the classification accuracy for ADNI data set.}}
 	\label{fig:sec3compare}
 \end{figure}
\subsection{Comparison to Other Methods}
Next, we compare the classification accuracy of the whole proposed TT-MMK method (as per~\figref{fig:d})
with the accuracy of the following existing approaches.
\begin{itemize}\setlength\itemsep{0.1em} 
	\item[] \textbf{SVM:} the standard SVM with  Gaussian Kernel. This is the most used optimization method for vector input based on the maximum margin technique.
	\item[] \textbf{STuM:} The Support Tucker Machine (STuM)~\citep{kotsia} uses the Tucker decomposition. The weight parameters of the SVM are computed for optimization into Tucker factorization form.
	\item[] \textbf{DuSK:} The idea of DuSK~\citep{DuSK} is based on defining the kernel approximation for the rank-one decomposition. This is one of the first methods in this direction.
	\item[] \textbf{MMK:} This method is an extension of DuSK to the KCP input. It uses the covariance/random matrix projection over the CP factor matrices, to get a KCP for the given input tensor~\citep{MMK}.
	We used the original DuSK and MMK codes provided by the authors of the paper~\citep{MMK}.
	\item[] \textbf{Improved MMK:} This is actually a simplified MMK, where the projection of the CP onto the KCP is omitted (the covariance/random matrices are replaced by the identity matrices).
	\item[] \textbf{TT-MMK:} This is Algorithm~\ref{alg:KTTCP} proposed in this paper.
\end{itemize}

The results are shown in~\figref{kttcp} and Table~\ref{notation-table}.
Our key observations are as follows.
\begin{itemize}\setlength\itemsep{0.1em}
	\item[]\textbf{(In)sensitivity to the TT Rank Selection:}~\figref{kttcp}  shows that the proposed method gives almost the same accuracy for different TT ranks.
	For some samples, even the TT rank of $2$ gives a good classification.
	Note that this is not the case for MMK, which requires a careful selection of the CP rank.
	\item[]\textbf{Computational Robustness:} while the CP decomposition can be computed using only iterative methods in general, all steps of the kernel computation in TT-MMK are ``direct'' in a sense that they require a fixed number of linear algebra operations, such as the SVD and matrix products.
	\item[]\textbf{Computational Complexity:} approximating the full tensor in the TT format has the same $\cO(n^{M+1})$ cost as the Tucker and CP decompositions.
	All further operations with factors scale linearly in the dimension $M$ and mode sizes, and polynomially in the ranks.
	\item[]\textbf{Classification Accuracy:} the proposed method gives the best average classification accuracy
        compared to five other state of the art techniques.
	\item[]\kk{\textbf{Generalization:} Top accuracy in datasets from two different areas (fMRI and HSI) shows that the method is suitable for a wide range of binary tensor classification problems.}
\end{itemize}
\begin{table}[ht]
	\caption{\kk{Average classification accuracy in percentage for different methods and data sets}}
	\label{notation-table}
	\vskip 0.15in
	\begin{center}
		\begin{small}
			\begin{sc}
				\begin{tabular}{lccccr}
					\toprule
					Methods & ADNI & ADHD & Indian Pines & Salinas \\
					\midrule
					SVM &  49  &  52 & 46 & 47\\
					STuM & 51  & 54 & 57&74\\
					DuSK & 55 & 58  & 60& 92\\
					MMK& 69  &  60 & 93& 98 \\
					Improved  MMK & 71  &61  & 94 &98\\
					TT-MMK &  \textbf{73} &  \textbf{64} & \textbf{99} & \textbf{99}\\
					\bottomrule
				\end{tabular}
			\end{sc}
		\end{small}
	\end{center}
	\vskip -0.1in
\end{table}
\begin{figure}[ht]
	\begin{center}
		\centerline{\includegraphics[width=0.5\textwidth, height = 0.4\textwidth]{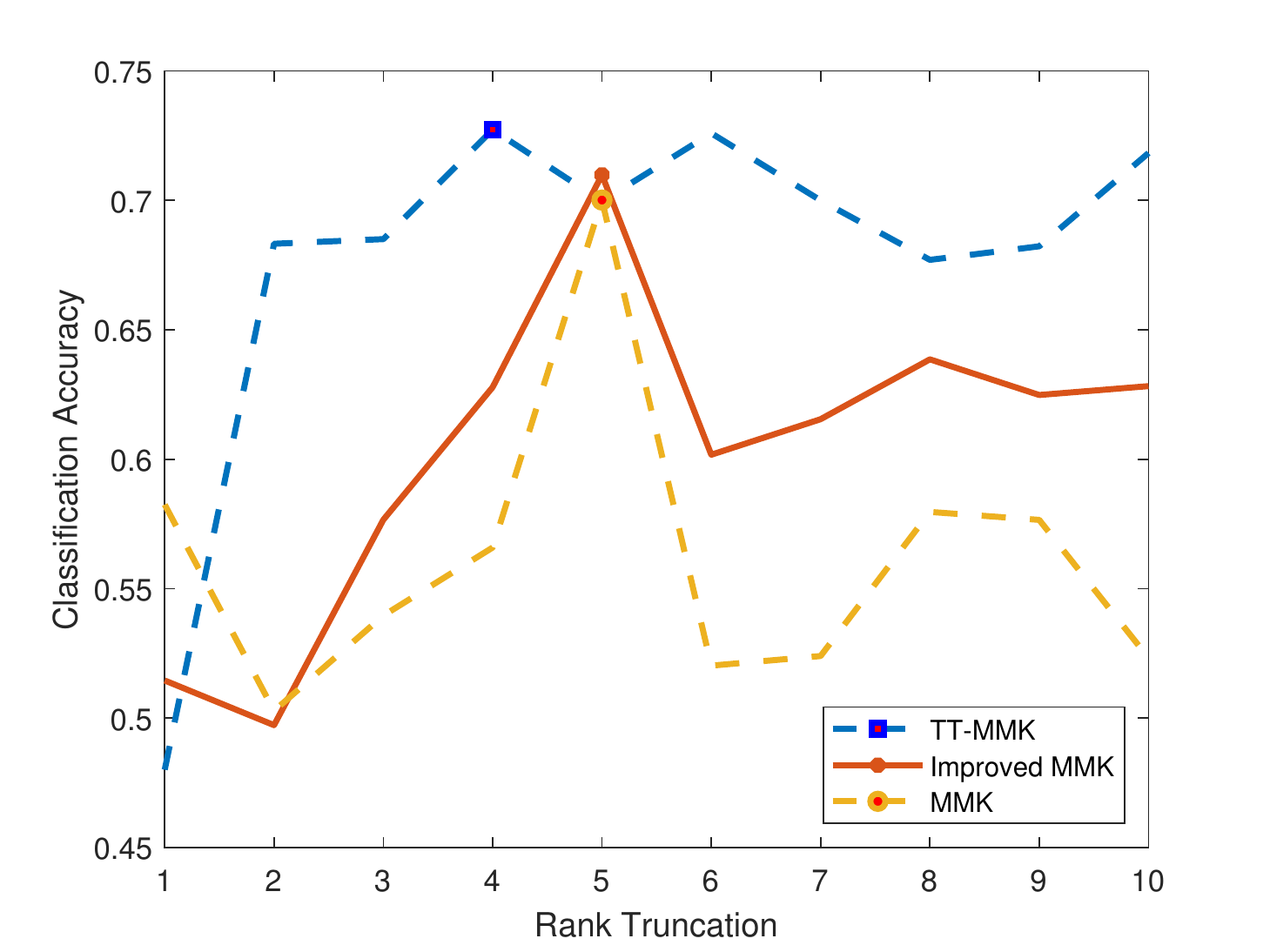}}
		\caption{Accuracy of TT-MMK (TTCP), MMK and Improved MMK methods for the ADNI data set using different TT/CP ranks respectively. TT-MMK and Improved MMK methods use identity kernel filtering while the MMK uses random kernel filtering.}
		\label{kttcp}
	\end{center}
	\vskip -0.2in
\end{figure}
\section{Conclusions}
	
	We have proposed a new kernel model for SVM classification of tensor input data. Our kernel extends the DuSK approach~\citep{MMK} to the TT decomposition of the input tensor
with enforced uniqueness and norm distribution. The TT decomposition can be computed more reliably than the CP decomposition used in the original DuSK kernel.
Using fMRI \kk{and Hyperspectral Image} data sets, we have demonstrated that the new TT-MMK method provides higher classification accuracy for an unsophisticated choice of the
TT ranks
for a wide range of classification problems.
We have found out that the each component of the proposed scheme (uniqueness \kk{enforced TT, TT-CP expansion and norm equilibration}) is crucial for achieving this accuracy.

Further research will consider improving the computational
complexity of the current scheme, as well as a joint optimization of the TT cores and SVM weights. Similarly to the neural network compression in the TT format~\citep{novikov-TT-NN-2015}, such a targeted
iterative refinement of the TT decomposition may improve the prediction accuracy.


\acks{This work is supported by the International Max Planck Research School for Advanced Methods in Process and System Engineering-\href{https://www.mpi-magdeburg.mpg.de/imprs}{\textbf{IMPRS ProEng}}, Magdeburg and is a part of Max Planck research network on Big Data Driven Material Science (\href{https://www.bigmax.mpg.de/}{\textbf{BiGmax}}) project. The authors are grateful to Lifang He for providing codes for the purpose of comparison.}

\vskip 0.2in
\bibliography{JMLR_Eff_STTM}

\end{document}